\newcommand{\eat}[1]{}
\DeclareMathOperator*{\argmin}{arg\,min}
\DeclareMathOperator*{\argmax}{arg\,max}
\newtheorem{lemma}{Lemma}
\ifcvprfinal\pagestyle{empty}\fi
\begin{document}

\title{Deep Metric Learning via Facility Location}

\newcommand*{\affmark}[1][*]{\textsuperscript{#1}}
\newcommand*{\email}[1]{\texttt{#1}}

\author{%
Hyun Oh Song\affmark[1], Stefanie Jegelka\affmark[2], Vivek Rathod\affmark[1], and Kevin Murphy\affmark[1] \vspace{0.5em}\\
{\affmark[1]Google Research, ~\affmark[2]MIT} \vspace{0.2em} \\
{\affmark[1]\email{\{hyunsong,rathodv,kpmurphy\}@google.com}, ~\affmark[2]\email{stefje@csail.mit.edu}}\\
}

\maketitle

\begin{abstract}
Learning image similarity metrics in an end-to-end
fashion with deep networks has demonstrated excellent results on tasks
such as clustering and retrieval.
However, current methods,
all focus on a very local view of the data.
In this paper, we propose a new metric learning scheme,
based on structured prediction,
that is aware of the global structure of the embedding
space, and which is designed to optimize
a clustering quality metric (NMI).
We show state of the art performance
on standard datasets, such as 
CUB200-2011 \cite{birds}, Cars196 \cite{cars}, and
Stanford online products \cite{liftedstruct} on NMI and R@K evaluation metrics.
\end{abstract}

\eat{
\begin{abstract}
Learning the representation and the similarity metric in an end-to-end
fashion with deep networks have demonstrated outstanding results
\cite{facenet, seanbell, liftedstruct, npairs} for clustering and
retrieval. However, these recent approaches still suffer from the
performance degradation stemming from the local metric training
procedure which is unaware of the global structure of the embedding
space. 

We propose a global metric learning scheme for optimizing the deep
metric embedding with the learnable clustering function and the
clustering metric (NMI) in a novel structured prediction framework. 

Our experiments on CUB200-2011 \cite{birds}, Cars196 \cite{cars}, and
Stanford online products \cite{liftedstruct} datasets show state of
the art performance both on the clustering and retrieval tasks
measured in the NMI and Recall@K evaluation metrics. 
\end{abstract}
}

\section{Introduction}
\label{sec:intro}
Learning to measure the similarity among arbitrary groups of data is
of great practical importance,
and can be used for a variety of tasks such as feature
based retrieval \cite{liftedstruct}, clustering \cite{deepclustering},
near duplicate detection \cite{Zheng_2016_CVPR}, verification
\cite{signatureVerification, faceVerification}, feature matching
\cite{universal_correspondence}, domain adaptation
\cite{domaintransduction}, video based weakly supervised learning \cite{triplet_video}, etc.
Furthermore, metric learning can be used for 
challenging extreme classification settings \cite{extreme_varma,
  extreme_langford, extreme_dhillon}, where the number of classes is
very large and the number of examples per class becomes scarce.
For example,
\cite{seanbell} uses this approach to perform
product search with $10$M images,
and 
\cite{facenet} shows superhuman performance on face
verification with 260M images of 8M distinct identities. 
In this setting, any direct classification or regression methods become
impractical due to the prohibitively large size of the label set.

Currently, the best approaches to metric learning
employ state of art neural networks \cite{alexnet, googlenet,
  vgg, resnet}, which are trained to produce an embedding of each
input vector so that a certain loss, related to distances of the
points, is minimized.
However, most current methods,
such as  \cite{facenet, seanbell, liftedstruct, npairs},
are very myopic in the sense that the loss is defined in
terms of  pairs or triplets inside the training mini-batch.
These methods
don't
take the global structure of the embedding space into
consideration, which can result in reduced  clustering
and retrieval performance.

Furthermore, most of the current methods \cite{facenet, seanbell, liftedstruct, npairs} in deep metric learning require a separate data preparation stage where the training data has to be first prepared in pairs \cite{contrastive, seanbell}, triplets \cite{triplet, facenet}, or n-pair tuples \cite{npairs} format. This procedure has very expensive time and space cost as it often requires duplicating the training data and needs to repeatedly access the disk.

In this paper,  we propose a novel learning framework which encourages
the network to learn an embedding function that directly 
optimizes a clustering quality metric (We use the normalized mutual information or  NMI metric
\cite{manningbook} to measure clustering quality,
but other metrics could be used instead.) and doesn't require the training data to be preprocessed in rigid paired format.
Our approach uses a structured prediction framework
\cite{Tsochantaridis/etal/04, Joachims/etal/09a}
to ensure that the score of the ground truth clustering assignment 
is higher than the score of any other clustering assignment.
Following the evaluation protocol in \cite{liftedstruct}, we report
state of the art results on CUB200-2011 \cite{birds}, Cars196
\cite{cars}, and Stanford online products \cite{liftedstruct} datasets
for clustering and retrieval tasks.

\section{Related work}
\label{sec:related}

\begin{figure*}[thbp]
\centering
\includegraphics[width=0.95\textwidth]{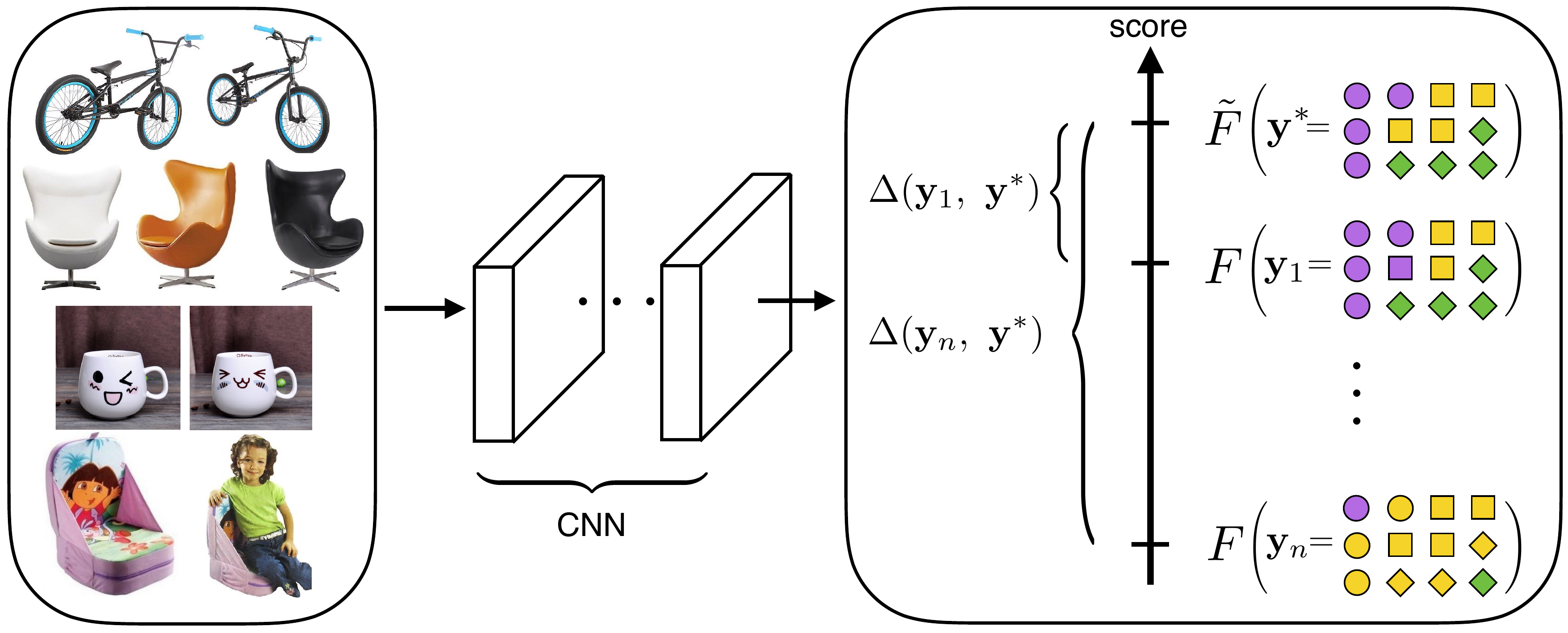}
\caption{Overview of the proposed framework. The network first computes the embedding vectors for each images in the batch and learns to rank the clustering score $\tilde{F}$ for the ground truth clustering assignment higher than the clustering score $F$ for any other assignment at least by the structured margin $\Delta(\mathbf{y},~\mathbf{y}^*)$.} 
\label{fig:figure1}
\end{figure*}

The seminal work in deep
metric learning is to train a siamese network
with contrastive loss
\cite{contrastive, faceVerification} where the task is to minimize the
pairwise distance between a pair of examples with the same class
labels, and to push the pairwise distance between a pair of examples
with different class labels at least greater than some fixed margin.

One downside of this approach is that it focuses on  absolute
distances, whereas for most tasks, relative distances matter more.
For this reason, more recent methods have proposed different loss
functions. We give a brief review of these below,
and we compare our method to them experimentally in Section~\ref{sec:results}.

\subsection{Triplet learning with semi-hard negative mining}

One improvement over contrastive loss is to use
triplet loss
\cite{triplet, triplet-svm}.
This first constructs a set of triplets,
where each triplet has
an \emph{anchor}, a \emph{positive}, and a \emph{negative} example,
where the anchor and
the positive have the same class labels and the negative has the
different class label.
It then tries to move the anchor and positive
closer than the distance between the anchor and the negative with some
fixed margin.
More precisely, it minimizes the following loss:
\begin{align}
  \ell(X,y) &= \frac{1}{|\mathcal{T}|} \sum_{(i,j,k) \in \mathcal{T}}
  \left[ D^2_{i,j} + \alpha -  D^2_{i,k} \right]_+
\end{align}
where
$\mathcal{T}$ is the set of triples,
$D_{i,j} =|| f(X_i) - f(X_j) ||_2$ is the Euclidean
distance in embedding space,
the operator $[\cdot]_+$ denotes the hinge function
which takes the positive component of the argument,
and $\alpha$ denotes a fixed
margin constant. 

In practice, the performance of these methods depends highly on
the  triplet sampling strategy. FaceNet \cite{facenet}
suggested the following online hard negative mining strategy.
 The idea is to construct
 triplets by associating with each positive pair in the minibatch
 a ``semi-hard'' negative example.
 This is an example which is further
 away from the anchor $i$ than the positive exemplar $j$ is,
 but still hard because the distance is close to the $i-j$ distance.
More precisely, it minimizes
\[
  \ell\left(X, \mathbf{y}\right)
  = \frac{1}{|\mathcal{P}|}\sum_{\left(i,j\right) \in \mathcal{P}}
  \left[ D_{i,j}^2 + \alpha - D^2_{i,k^*\left(i, j\right)} \right]_+
\]
where
\[
k^*\left(i, j\right) = \argmin_{k: ~\mathbf{y}[k] \neq \mathbf{y}[i]}
D^2_{i,k} ~~\text{s.t.}~ D^2_{i,k} > D^2_{i,j}
\]
and $\mathcal{P}$ is the set of pairs with the same class label.
If there is  no such negative example satisfying the constraint, we just pick the furthest negative example
in the minibatch,
as follows:
\[
k^*\left(i, j\right) =
\argmax_{k: ~\mathbf{y}[k] \neq \mathbf{y}[i]} D^2_{i,k} 
\]

In order to get good results, the FaceNet paper had to use very large minibatches (1800 images),
to ensure they picked enough hard negatives. This makes it hard to train the model on a GPU due to the GPU memory constraint.
Below we describe some other losses which are easier to minimize using small minibatches.

\eat{
\begin{align}
\begin{split}
&\ell\left(X, \mathbf{y}\right) = \frac{1}{|\mathcal{P}|}\sum_{\left(i,j\right) \in \mathcal{P}} \left[ D_{i,j}^2 + \alpha - D^2_{i,k^*\left(i, j\right)} \right]_+\\
&\text{s.t.}~~~ I_{i,j} = \mathbb{I}\left[\exists k: \mathbf{y}[k] \neq \mathbf{y}[i], ~\text{s.t.}~ D^2_{i,k} > D^2_{i,j}\right] \\
&\hspace{1.8em} k^*\left(i, j\right) = I_{i,j} \left(\argmin_{k: ~\mathbf{y}[k] \neq \mathbf{y}[i]} D^2_{i,k} ~~\text{s.t.}~ D^2_{i,k} > D^2_{i,j}\right) +\\ 
&\hspace{6em} \left(1 - I_{i,j} \right) \left( \argmax_{k: ~\mathbf{y}[k] \neq \mathbf{y}[i]} D^2_{i,k} \right),
\end{split}
\end{align}

\noindent
where $\mathcal{P}$ denotes the set of pairs of examples with the same
class labels. The operator $[\cdot]_+$ denotes the hinge function
which takes the positive component of the argument,
$\mathbb{I}[\cdot]$ is the indicator function which returns $1$ if the
argument is true and $0$ otherwise and $D_{i,j}$ means the euclidean
distance of the two data points $X_i$ and $X_j$ in the embedding
space; $|| f(X_i) - f(X_j) ||_2$. The $\alpha$ denotes the fixed
margin constant. 
}

\subsection{Lifted structured embedding}

Song \textit{et al}. \cite{liftedstruct} proposed lifted structured embedding
 where each positive pair compares the distances
against all the negative pairs weighted by the margin constraint
violation. The idea is to have a differentiable smooth loss which
incorporates the online hard negative mining functionality using the
\emph{log-sum-exp} formulation. 

\begin{align}
\begin{split}
\ell\left(X, \mathbf{y}\right) = \frac{1}{2|\mathcal{P}|}\sum_{\left(i,j\right) \in \mathcal{P}}& \Bigg[ \log \Bigg( \sum_{\left(i,k\right) \in \mathcal{N}} \exp \left\{\alpha - D_{i,k}\right\} +\\
&\sum_{\left(j,l\right) \in \mathcal{N}} \exp \left\{ \alpha - D_{j,l} \right\} \Bigg) + D_{i,j} \Bigg]_+^2,
\end{split}
\end{align}

\noindent
where $\mathcal{N}$ denotes the set of pairs of examples with different class labels.

\subsection{N-pairs embedding}

Recently, Sohn \textit{et al}. \cite{npairs} proposed N-pairs loss 
which enforces softmax cross-entropy loss among the pairwise
similarity values in the batch. 

\begin{align}
\begin{split}
  \ell\left(X, \mathbf{y}\right) = \frac{-1}{|\mathcal{P}|}
  &\sum_{(i,j) \in \mathcal{P}} \log \frac{\exp\{S_{i,j}\}}{\exp\{S_{i,j}\} + \displaystyle\sum_{k: ~\mathbf{y}[k] \neq \mathbf{y}[i]} \exp\{S_{i,k}\}}\\
  &+ \frac{\lambda}{m} \sum_i^m ~ || f(X_i) ||_2,
\end{split}
\end{align}

\noindent where $S_{i,j}$ means the feature dot product between two data points
in the embedding space; $S_{i,j} = f(X_i)^\intercal f(X_j)$, $m$ is the number of the data, and $\lambda$ is the regularization constant for the $\ell_2$ regularizer on the embedding vectors. 

\subsection{Other related work}

In addition to the above work on metric learning,
there has been some recent work on learning to cluster
with deep networks.
Hershey \textit{et al}. \cite{deepclustering} uses Frobenius norm
on the residual between the binary ground truth and the estimated pairwise
affinity matrix;
they apply this to speech spectrogram signal clustering.
However,  using the Frobenius norm directly is
suboptimal, since it ignores the fact that the affinity matrix is
positive definite.

To overcome this, matrix backpropagation
\cite{matrixbackprop} first projects the true and predicted affinity
matrix to a metric space where Euclidean distance is appropriate.
Then it applies this to normalized cuts for unsupervised image segmentation.
However, this approach
requires computing the eigenvalue decomposition of the data matrix,
which has cubic time complexity in the number of data and is thus not very practical for
large problems.

\section{Methods}
\label{sec:methods}

One of the key attributes which the recent state of the art deep
learning methods  in Section~\ref{sec:related} have in common is
that they are all local metric learning
methods. Figure~\ref{fig:local_failure}
illustrates a case where this can fail.
In particular,
whenever a positive pair (such as the two purple bold
dots connected by the blue edge)
is separated by examples from other classes, the attractive gradient
signal from the positive pair gets outweighed by the repulsive gradient
signal from the negative data points (yellow and green data points
connected with the red edges).
This failure can lead to groups of examples with the same class labels
being separated into partitions in the embedding space that are far
apart from each other. This can lead to degradation in
the clustering and nearest neighbor based retrieval performance.
For example, suppose we incorrectly created 4 clusters in
Figure~\ref{fig:local_failure}.
If we asked for the  $12$ nearest neighbors
of one of purple points, we would retrieve points belonging to other
classes.

To overcome this problem, we propose a method that learns to embed
points so as to minimize a clustering loss, as we describe below.

\eat{
"Standard methods for learning embeddings measure the local similarity
between pairs, or triples, or quads of points. In this paper, we
propose a new perspective. Specifically, suppose we use the embedding
to assign each point to its most similar neighbor; we then measure the
resulting "distortion" (sum of distances to each centroid). We propose
to compare this to the distortion induced by the optimal clustering
based on the ground truth labels. This gives us a more global measure
of quality of an embedding. We show how to optimize the difference
between the optimal and measured distortion, plus a (data-dependent)
margin term, using a greedy optimization procedure inside of the SGD
inner loop."
}

\begin{figure}[thbp]
\centering
\includegraphics[width=0.40\textwidth]{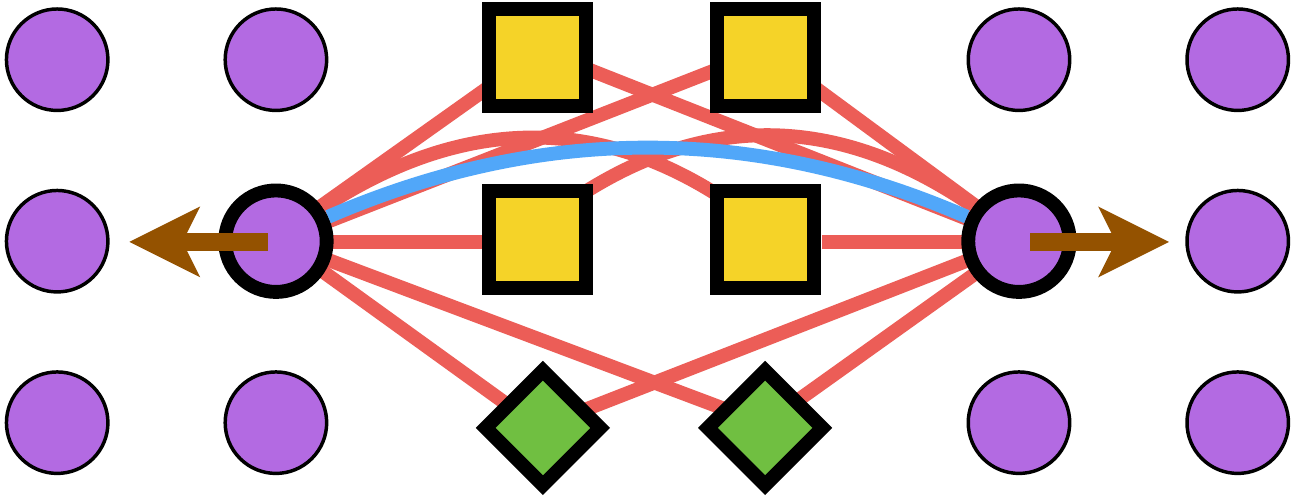}
\caption{Example failure mode for local metric learning
  methods. Whenever a positive pair (linked with the blue edge)
  is separated by negative examples, the gradient signal from the
  positive pair (attraction) gets outweighed by the negative pairs
  (repulsion). Illustration shows the failure case for 2D embedding
  where the purple clusters can't be merged into one cluster.} 
\label{fig:local_failure}
\end{figure}

\begin{figure}[thbp]
\centering
\includegraphics[width=0.46\textwidth]{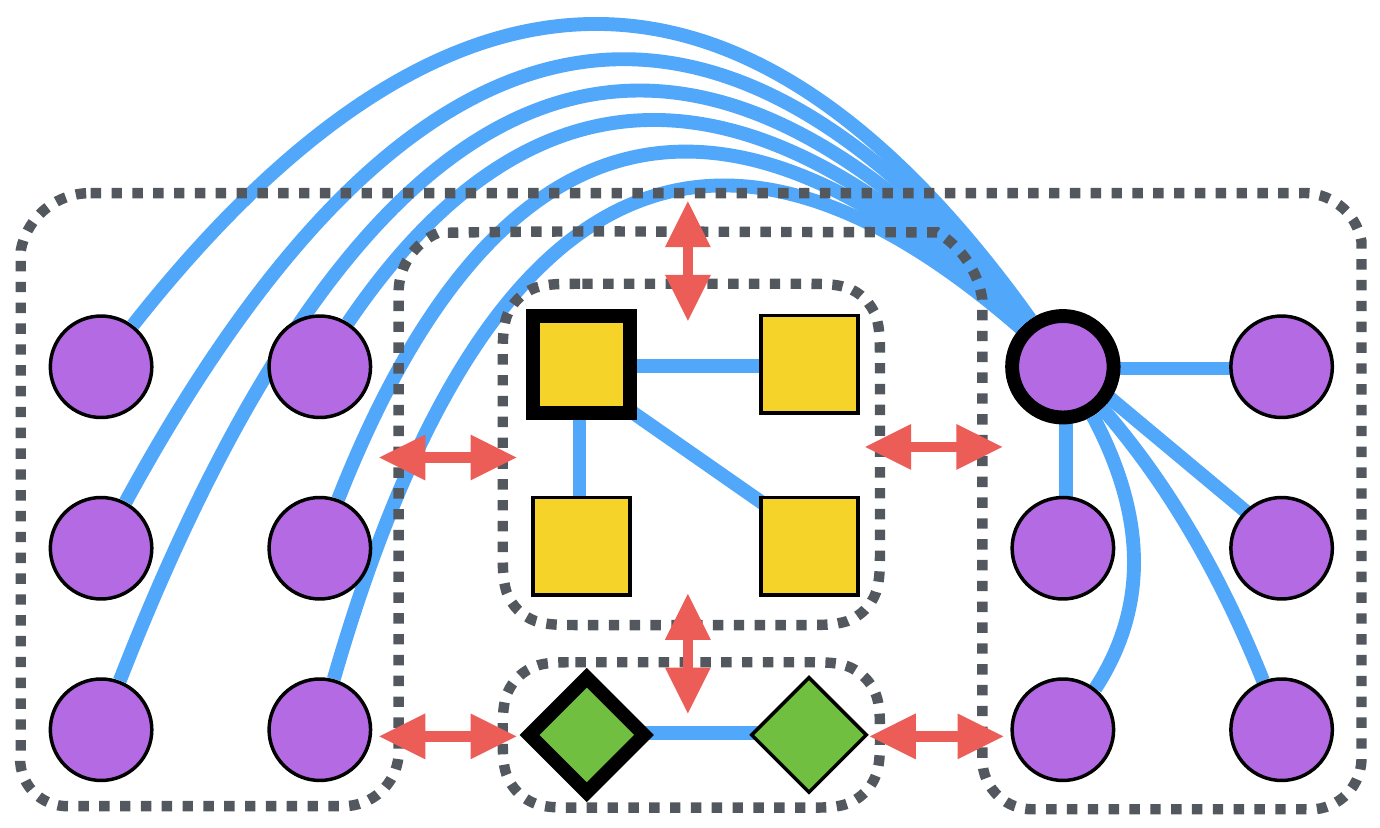}
\caption{Proposed clustering loss for the same embedding layout in
  figure \ref{fig:local_failure}. Nodes highlighted in bold are the
  cluster medoids. The proposed method encourages small sum of
  distances within each cluster, while discouraging different clusters from
  getting close to each other.
} 
\label{fig:clustering_loss}
\end{figure}

\subsection{Facility location problem}

Suppose we have a set of inputs $X_i$,
and an embedding function $f(X_i;\Theta)$ that maps each input to a
point in some $K$ dimensional space.
Now suppose we compress this set of points by mapping each example
$i$ to its nearest point from a chosen set of landmarks
$S \subseteq \mathcal{V}$,
where  $\mathcal{V}  = \{1, \ldots, |X|\}$ is the ground set.
We can define the resulting function as  follows:
\begin{align}
F(X, S; \Theta) = -\sum_{i \in |X|} \min_{j \in S} ||f(X_i; \Theta) - f(X_j; \Theta)||,
\label{eqn:facility}
\end{align}
This is called the
facility location function,
and has been widely used in data summarization
and clustering \cite{bilmes12, bilmes14}.
The idea is that this
function measures the sum of the travel distance for each customer in
$X$ to their respective nearest facility location in $S$. In terms of
clustering, data points in $S$ correspond to the cluster medoids, and
the cluster assignment is based on the nearest medoid from each data
point. Maximizing equation \ref{eqn:facility} with respect to subset
$S$ is NP-hard, but there is a well established worst case optimality
bound of $O\left(1 - \frac{1}{e}\right)$ for the greedy solution of
the problem via submodularity \cite{krause2012submodular}.

Below we show how to use the facility location problem as a subroutine
for deep metric learning.

\subsection{Structured facility location for deep metric learning}

The oracle scoring function $\tilde{F}$ measures the quality of the
clustering given the ground truth clustering assignment $\mathbf{y}^*$ and
the embedding parameters $\Theta$:

\begin{align}
\tilde{F}(X, \mathbf{y}^*; \Theta) = \sum_{k}^{|\mathcal{Y}|} \max_{j \in \left\{i:~ \mathbf{y}^*[i] = k\right\}} F\left(X_{\left\{i:~ \mathbf{y}^*[i] = k\right\}}, \{j\}; \Theta \right),
\end{align}

\noindent where $\left\{i:~ \mathbf{y}^*[i] = k\right\}$ denotes the subset of the elements in $\mathcal{V}$ with the ground truth label equal to $k$.

We would like the clustering score of the oracle clustering assignment to be greater than the score for the maximally violating clustering assignment. Hence we define the following structured loss function:


\begin{align}
\begin{split}
\ell\left(X, ~\mathbf{y}^{*}\right) = \Bigg[ &\underbrace{\max_{\substack{S \subset \mathcal{V} \\|S|=|\mathcal{Y}|}} \bigg\{ F(X, S; ~\Theta) + \gamma ~\Delta\left(g(S), ~\mathbf{y}^*\right)}_{\left(\ast\right)} \bigg\} \\
& - \tilde{F}(X, \mathbf{y}^*; ~\Theta) \Bigg]_+
\end{split}
\label{eqn:clustering_loss}
\end{align}

We will define the structured margin
$\Delta\left(\mathbf{y}, ~\mathbf{y}^*\right)$ below.
The function $y = g(S)$  maps the set of indices $S$
to a set of cluster labels by
assigning each  data point to its nearest facility in
$S$:

\begin{align}
g(S)[i] = \displaystyle \argmin_{j} ||f(X_i; \Theta) - f(X_{\left\{j |~ j \in S\right\}}; \Theta)||
\end{align}

Intuitively, the loss function in Equation \ref{eqn:clustering_loss}
encourages the network to learn an  embedding function
$f\left(\cdot; \Theta \right)$ such that the oracle clustering score
$\tilde{F}$ is greater than the clustering score $F$ for any other
cluster assignments $g(S)$ at least by the structured margin
$\Delta\left(\mathbf{y}, ~\mathbf{y}^*\right)$. Figure \ref{fig:figure1} gives the pictorial illustration of the overall framework.

The structured margin term $\Delta\left(\mathbf{y},
~\mathbf{y}^*\right)$ measures the quality of the clustering. The
margin term outputs $0$ if the clustering quality of $\mathbf{y}$ with
respect to the ground truth clustering assignment $\mathbf{y}^*$ is
perfect (up to a permutation) and $1$ if the quality is the worst. We
use the following margin term

\begin{align}
\Delta\left(\mathbf{y}, ~\mathbf{y}^*\right) = 1-\text{NMI}\left(\mathbf{y}, ~\mathbf{y}^*\right)
\label{eqn:structured_loss}
\end{align}

\noindent
where NMI is the normalized mutual information (NMI)
\cite{manningbook}.
This measures the label agreement between the two clustering
assignments ignoring the permutations. It is defined by the ratio of
mutual information and the square root of the product of entropies for each assignments:
\begin{align}
NMI(\mathbf{y}_1,
\mathbf{y}_2)=\frac{MI(\mathbf{y}_1,\mathbf{y}_2)}{\sqrt{H(\mathbf{y}_1)
  H(\mathbf{y}_2)}}
\end{align}
The marginal and joint probability mass used for
computing the entropy and the mutual information can be estimated as follows:

\begin{align}
    \begin{split}
    &P(i)=\frac{1}{m}~\sum_{j} \mathbb{I}[\mathbf{y}[j]==i]\\
    &P(i,j)=\frac{1}{m}~\sum_{k,l} \mathbb{I}[\mathbf{y}_1[k]==i]\cdot \mathbb{I}[\mathbf{y}_2[l]==j],
    \end{split}
\end{align}

\noindent where $m$ denotes the number of data (also equal to $|X|$).

Figure \ref{fig:clustering_loss} illustrates the advantages of the
proposed algorithm. Since the algorithm is aware of the global
landscape of the embedding space, it can overcome the bad local optima
in figure \ref{fig:local_failure}. The clustering loss encourages
small intra cluster (outlined by the dotted lines in figure
\ref{fig:clustering_loss}) sum of distances with respect to each
cluster medoids (three data points outlined in bold) while
discouraging different clusters from getting close to each other via
the NMI metric in the structured margin term.  

\subsection{Backpropagation subgradients}

We fit our model using stochastic gradient descent.
The key step is to compute the derivative of the loss, which is given
by the following expression:

\begin{align}
\begin{split}
  \partial ~\ell\left(X, \mathbf{y}^*\right) = ~&
  \mathbb{I}\left[\ell\left(X, \mathbf{y}^*\right) > 0\right] \Big(\nabla_{\Theta} F\left(X, S_{\text{PAM}}; \Theta \right)\\
& - \nabla_{\Theta} \tilde{F}\left(X, \mathbf{y}^*; \Theta\right) \Big)\\
\end{split}
\label{eqn:backprop}
\end{align}

\noindent
Here  $S_{\text{PAM}}$ is the solution to the
subproblem marked $(\ast)$ in Equation~\ref{eqn:clustering_loss};
we discuss how to compute this in Section~\ref{sec:PAM}.

The first gradient term is as follows:

\makeatletter
\newcommand*\bigcdot{\mathpalette\bigcdot@{.4}}
\newcommand*\bigcdot@[2]{\mathbin{\vcenter{\hbox{\scalebox{#2}{$\m@th#1\bullet$}}}}}
\makeatother

\begin{align}
  \nabla_{\Theta} F\left(X, S; \Theta\right)
  &= -\sum_{i \in |X|} \Bigg[
    \frac{f(X_i; \Theta) - f(X_{j^*(i)}; \Theta)}
         {|| f(X_i; \Theta) - f(X_{j^*(i)}; \Theta) ||}  \notag\\ 
& \bigcdot \nabla_{\Theta} \left( f(X_i; \Theta) - f(X_{j^*(i)}; \Theta)
         \right)\Bigg]
\end{align}

\noindent where $j^*(i)$ denotes the index of the closest facility
location in the set $S_{\text{PAM}}$. The gradient for the oracle
scoring function can be derived by computing
\begin{align}
  \nabla_{\Theta}
\tilde{F}\left(X, \mathbf{y}_i^*; \Theta\right) = \sum_{k}
\nabla_{\Theta} F\left(X_{\left\{i:~ \mathbf{y}^*[i] = k\right\}},
\{j^*(k)\}; \Theta \right)
\end{align}

Equation \ref{eqn:backprop} is the formula for the exact subgradient and we find an approximate maximizer $S_{\text{PAM}}$ in the equation (section~\ref{sec:PAM}) so we have an approximate subgradient. However, this approximation works well in practice and have been used for structured prediction setting \cite{bilmes12, bilmes14}.

\subsection{Loss augmented inference}
\label{sec:PAM}

We solve the optimization problem $(\ast)$ in Equation~\ref{eqn:clustering_loss} in two steps. First, we use the greedy Algorithm~\ref{alg:greedy} to select an initial good set of facilities. In each step, it chooses the element $i^*$ with the best marginal benefit. The running time of the algorithm is
$O\left(|\mathcal{Y}|^3 \cdot |\mathcal{V}|\right)$,
where $|\mathcal{Y}|$ denotes the number of clusters in the batch and $\mathcal{V}  = \{1, \ldots, |X|\}$. This time is linear in the size of the minibatch, and hence does not add much overhead on top of the gradient computation.
Yet, if need be, we can speed up this part via a stochastic version of the greedy algorithm \cite{mirzasoleiman15}. 

This algorithm is motivated by the fact that the first term, $F(X,S; \Theta)$, is a monotone submodular function in $S$. We observed that throughout the learning process, this term is large compared to the second, margin term. Hence, in this case, our function is still close to submodular. For approximately submodular functions, the greedy algorithm can still be guaranteed to work well \cite{das11}.


\begin{algorithm}
    \SetKwInOut{Input}{Input}
    \SetKwInOut{Initialize}{Initialize}
    \SetKwInOut{Output}{Output}
    \SetKwInOut{Define}{Define}

    \Input{$X \in \mathbb{R}^{m\times d},~ \mathbf{y}^* \in |\mathcal{Y}|^m,~ \gamma$}
    \Output{$S \subseteq \mathcal{V}$}
    \Initialize{$S = \left\{\emptyset\right\}$}
    \Define{$A(S) := F(X, S; \Theta) + \gamma \Delta\left(g(S), ~\mathbf{y}^*\right)$}
    \vspace{1.0em}
    
    \While{$|S| < |\mathcal{Y}|$}
    {
    	$i^* = \displaystyle \argmax_{i \subseteq \mathcal{V} \setminus S} ~ A\left(S \cup \{i\}\right) - A(S)$\\
	\vspace{0.5em}
	$S := S \cup \{ i^* \}$
    }
    \Return S
    \caption{Loss augmented inference for ($\ast$)}
    \label{alg:greedy}
\end{algorithm}

Yet, since $A(S)$ is not entirely submodular, we refine the greedy solution with a local search, Algorithm \ref{alg:pam}. This algorithm performs pairwise exchanges of current medoids $S[k]$ with alternative points $j$ in the same cluster.
The running time of the algorithm is
$O\left(T |\mathcal{Y}|^3 \cdot |\mathcal{V}|\right)$,
where $T$ is the maximum number of iterations.
In practice, it converges quickly, so
we run the algorithm for $T=5$ iterations only.

Algorithm \ref{alg:pam} is similar to
the partition around medoids (PAM) \cite{pam} algorithm for k-medoids
clustering, which independently reasons about each cluster during the
medoid swapping step.
Algorithm~\ref{alg:pam} differs from PAM by the structured margin term, which involves all clusters simultaneously.

The following lemma states that the algorithm can only improve over the greedy solution:
\begin{lemma}
  Algorithm~\ref{alg:pam} monotonically increases the objective function $A(S) = F(X, S; \Theta) + \gamma \Delta\left(g(S), ~\mathbf{y}^*\right)$.
\end{lemma}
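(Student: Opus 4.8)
The plan is a straightforward monotone hill-climbing argument, with $A(S)$ itself serving as the potential function. Let $S_0$ denote the set of $|\mathcal{Y}|$ medoids returned by the greedy Algorithm~\ref{alg:greedy}, and let $S_t$ be the medoid set at the start of sweep $t$ of Algorithm~\ref{alg:pam}. Within a sweep the algorithm visits the clusters $k = 1, \ldots, |\mathcal{Y}|$ in turn; for the current medoid $S[k]$ it considers each candidate replacement $j$ among the points presently assigned to cluster $k$, forms the candidate set $S' = (S \setminus \{S[k]\}) \cup \{j\}$, and — this is the point I would make explicit — evaluates the full objective $A(S') = F(X, S'; \Theta) + \gamma\,\Delta(g(S'), \mathbf{y}^*)$ with the assignment $g(S')$ recomputed from scratch. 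Since the set of candidates includes the no-op $j = S[k]$, the step replaces $S$ by $\argmax_{S'} A(S')$ over these candidates, so $A$ never decreases at any elementary step. Inducting over the (finitely many) elementary steps inside one sweep, and then over sweeps, gives $A(S_{t+1}) \ge A(S_t) \ge \cdots \ge A(S_0)$, which is exactly the claim.

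The one place that needs care — and essentially the only reason the statement is not completely trivial — is that $A$ does not decompose over clusters. Swapping the medoid of cluster $k$ changes $g(S)$ globally: a point that was nearest to $S[k]$ may now be nearest to some $S[k']$, and points from other clusters may migrate into $k$. Classical PAM can reason cluster-by-cluster because its objective is separable; here the margin term $\gamma\,\Delta(g(S), \mathbf{y}^*)$ couples all clusters, so a naive per-cluster improvement argument would be invalid. The way I would handle this obstacle is precisely the observation above: the acceptance test compares the \emph{full} $A$ before and after the swap, with $g(\cdot)$ re-evaluated globally, so whatever nonlocal reshuffling of assignments a swap induces is already reflected in $A(S')$. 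Thus the potentially troublesome coupling is absorbed into the potential function and requires no separate treatment.

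Finally, I would add a short remark — not needed for the lemma as stated, but natural to mention — that if one accepts a swap only on \emph{strict} improvement $A(S') > A(S)$, then since there are finitely many size-$|\mathcal{Y}|$ subsets of $\mathcal{V}$ and $A$ is bounded, the sequence $A(S_t)$ is strictly increasing until no improving swap exists, so the local search terminates after finitely many sweeps; this is consistent with the empirical observation that $T = 5$ sweeps suffice. For the stated result, though, monotonicity alone, the content of the proof is just the "accept only non-worsening swaps, measured by the full $A$" invariant.
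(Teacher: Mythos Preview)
Your argument is essentially the paper's own proof: the key observation is that the current medoid $c = S[k]$ is always among the candidates $j$, so the $\argmax$ satisfies $A((S\setminus\{c\})\cup\{j^*\}) \ge A((S\setminus\{c\})\cup\{c\}) = A(S)$, and hence each elementary swap is non-decreasing. Your additional discussion of the non-separability of the margin term and the termination remark go beyond what the paper states, but the core monotonicity argument is identical.
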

\begin{proof}
  In any step $t$ and for any $k$, let $c = S[k]$ be the $k$th medoid in $S$.
  The algorithm finds the point $j$ in the $k$th cluster such that $A((S \setminus \{c\})\cup\{j\})$ is maximized. Let $j^*$ be a maximizing argument. Since $j = c$ is a valid choice, we have that $A((S \setminus \{c\})\cup\{j^*\}) \geq A((S \setminus \{c\})\cup\{c\} ) = A(S)$, and hence the value of $A(S)$ can only increase.
\end{proof}

In fact, with a small modification and $T$ large enough, the algorithm is guaranteed to find a local optimum, i.e., a set $S$ such that $A(S) \geq A(S')$ for all $S'$ with $|S \Delta S'|=1$ (Hamming distance one). Note that the overall problem is NP-hard, so a guarantee of global optimality is impossible.
\begin{lemma}
  If the exchange point $j$ is chosen from $X$ and $T$ is large enough that the algorithm terminates because it makes no more changes, then Algorithm~\ref{alg:pam} is guaranteed to find a local optimum.
\end{lemma}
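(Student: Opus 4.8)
The plan is to read the asserted optimality condition straight off the stopping criterion, once we pin down what the ``small modification'' is. The modification is precisely the hypothesis ``the exchange point $j$ is chosen from $X$'': instead of restricting the candidate replacement for a medoid $S[k]$ to points lying in the cluster currently assigned to $S[k]$, we let $j$ range over the whole ground set $\mathcal{V}=\{1,\dots,|X|\}$. With this change the inner step of Algorithm~\ref{alg:pam} becomes a best-single-swap step, and ``$T$ large enough that the algorithm terminates because it makes no more changes'' means there is a full sweep over the cluster slots $k=1,\dots,|\mathcal{Y}|$ during which no medoid is replaced; let $S$ denote the set at the start — hence also at the end — of that change-free sweep.

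First I would extract an invariant from termination. For slot $k$, the (modified) algorithm computes $j^\ast=\argmax_{j\in\mathcal{V}}A\bigl((S\setminus\{S[k]\})\cup\{j\}\bigr)$ and replaces $S[k]$ by $j^\ast$ whenever this strictly improves $A$. Since $j=S[k]$ is itself a feasible candidate and yields $(S\setminus\{S[k]\})\cup\{S[k]\}=S$, the fact that no replacement was made at slot $k$ during the terminating sweep forces $A\bigl((S\setminus\{S[k]\})\cup\{j\}\bigr)\le A(S)$ for \emph{every} $j\in\mathcal{V}$. Because the sweep visited every $k\in\{1,\dots,|\mathcal{Y}|\}$ without change, this inequality holds simultaneously for all current medoids $S[1],\dots,S[|\mathcal{Y}|]$.

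Then I would match this invariant to the definition of a local optimum. Algorithm~\ref{alg:pam} only ever swaps one medoid for one non-medoid, so $|S|=|\mathcal{Y}|$ throughout, and the relevant neighbours of $S$ are the sets $S'=(S\setminus\{c\})\cup\{j\}$ obtained by a single exchange (the neighbourhood in the statement). Given such an $S'$, write $c=S[k]$ for the appropriate slot $k$; the invariant applied to that $k$ and that $j$ gives $A(S')=A\bigl((S\setminus\{S[k]\})\cup\{j\}\bigr)\le A(S)$. Hence $A(S)\ge A(S')$ for all such $S'$, which is exactly the claimed local optimality, and the overall problem being NP-hard is consistent with this being only a \emph{local} guarantee.

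The proof uses no property of $A$ beyond ``$j=S[k]$ is a legal candidate for slot $k$,'' so there is no real computation to grind through; the facility-location-plus-NMI structure of $A$ is irrelevant here. The one place to be careful is the bookkeeping around termination — arguing that a single change-free sweep certifies the non-improvement condition for every cluster slot at once — and being explicit about why the modification is essential: with the original same-cluster restriction of Algorithm~\ref{alg:pam}, a beneficial swap that pulls a point out of a \emph{different} cluster is never tested, so termination would only certify optimality against same-cluster exchanges, not against all single exchanges. Closing that gap between the two neighbourhoods is the only subtlety, and it is precisely what ``with a small modification'' achieves.
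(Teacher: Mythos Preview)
Your argument is correct, and in fact the paper states this lemma without proof, so there is nothing to compare against; the reasoning you give---that a change-free sweep over all slots with $j$ ranging over all of $X$ certifies $A\bigl((S\setminus\{S[k]\})\cup\{j\}\bigr)\le A(S)$ for every $k$ and every $j$, hence for every single-swap neighbour $S'$---is exactly the intended one. Your closing remark about why the unrestricted search over $j\in X$ is essential (otherwise cross-cluster swaps are never tested) is also the right diagnosis of what the ``small modification'' buys.
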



\begin{algorithm}
    \SetKwInOut{Input}{Input}
    \SetKwInOut{Initialize}{Initialize}
    \SetKwInOut{Output}{Output}
    \SetKwInOut{Define}{Define}

    \Input{$X \in \mathbb{R}^{m\times d},~ \mathbf{y}^* \in
      |\mathcal{Y}|^m,~ S_{\text{init}}, ~\gamma, T$}
    \Output{$S$}
    \Initialize{$S = S_{\text{init}}, t=0$}
    \vspace{1.0em}
    \For{$t < T$}
    {
    	\small{\tcp{Perform cluster assignment}}
	$\mathbf{y}_{\text{PAM}} = g\left(S\right)$\\
	\vspace{1em}
	\small{\tcp{Update each medoids per cluster}}
	\For{$k < |\mathcal{Y}|$}{
		\small{\tcp{Swap the current medoid in cluster k if it increases the score.}}
		$S[k] = \displaystyle \argmax_{j \in \left\{i:~ \mathbf{y}_{\text{PAM}}[i] = k\right\}} F\left(X_{\left\{i:~ \mathbf{y}_{\text{PAM}}[i] = k\right\}}, \{j\}; \Theta \right)$\\
		\vspace{0.5em}
		$~~~~~~~~~~~~~~~~~ + \gamma \Delta \left(g\left(S \setminus \left\{S[k]\right\} \cup \left\{j \right\} \right), ~\mathbf{y}^* \right)$
	}
    }
    \Return S
    \caption{Loss augmented refinement for ($\ast$)}
    \label{alg:pam}
\end{algorithm}

\subsection{Implementation details}
\label{sec:implementation}

We used Tensorflow \cite{tensorflow} package for our
implementation. For the embedding vector, we $\ell_2$ normalize the embedding vectors before
computing the loss for our method. The model slightly underperformed when we omitted the embedding normalization. We also tried solving the loss augmented
inference using Algorithm \ref{alg:pam} with random initialization, but
it didn't work as well as initializing the algorithm with the greedy
solution from Algorithm \ref{alg:greedy}.

For the network architectures, we used the Inception \cite{inceptionv1} network with
batch normalization \cite{batchnorm} pretrained on ILSVRC 2012-CLS
\cite{imagenet} and finetuned the network on our datasets. All the
input images are first resized to square size ($256 \times 256$)
and cropped at $227 \times 227$. For the data augmentation, we used
random crop with random horizontal mirroring for training and a single
center crop for testing. In Npairs embedding \cite{npairs}, they take multiple random crops and average the embedding vectors from the cropped images during testing. However, in our implementation of \cite{npairs}, we take a single center crop during testing for fair comparison with other methods. 

The experimental ablation study reported in
\cite{liftedstruct} suggested that the embedding size doesn't play a
crucial role during training and testing phase so we decided to fix
the embedding size at $d=64$ throughout the experiment (In \cite{liftedstruct}, the authors report the recall@K results with $d=512$ and provided the results for $d=64$ to us for fair comparison). We used RMSprop
\cite{rmsprop} optimizer with the batch size $m$ set to $128$. For the margin multiplier constant $\gamma$, we gradually decrease it using exponential decay with the decay rate set to $0.94$.

As briefly mentioned in section \ref{sec:intro}, the proposed method does not require the data to be prepared in any rigid paired format (pairs, triplets, n-pair tuples, etc). Instead we simply sample $m$ (batch size) examples and labels at random. That said, the clustering loss becomes trivial if a batch of data all have the same class labels (perfect clustering merging everything into one cluster) or if the data all have different class labels (perfect clustering where each data point forms their own clusters). In this regard, we guarded against those pathological cases by ensuring the number of unique classes ($C$) in the batch is within a reasonable range. We tried three different settings $\frac{C}{m} = \{0.25, 0.50, 0.75\}$ and the choice of the ratio did not lead to significant changes in the experimental results. For the CUB-200-2011 \cite{birds} and Cars196 \cite{cars}, we set $\frac{C}{m} = 0.25$. For the Stanford Online Products \cite{liftedstruct} dataset, $\frac{C}{m} = 0.75$ was the only possible choice because the dataset is extremely fine-grained.

\section{Experimental results}
\label{sec:experiments}
\label{sec:results}

Following the experimental protocol in \cite{liftedstruct, npairs}, we
evaluate the clustering and $k$ nearest neighbor retrieval
\cite{recall_at_K} results on data from previously unseen classes on
the CUB-200-2011 \cite{birds}, Cars196 \cite{cars}, and Stanford
Online Products \cite{liftedstruct} datasets. We compare our method
with three current state of the art methods in deep metric learning:
(1) triplet learning with semi-hard negative mining strategy
\cite{facenet}, (2) lifted structured embedding \cite{liftedstruct},
(3) N-pairs metric loss \cite{npairs}. To be comparable with prior
work,
we
$\ell_2$ normalize the embedding for the triplet (as prescribed by
\cite{facenet}) and our method, but not for the lifted structured loss and
the N-pairs loss (as in the implementation sections in
\cite{liftedstruct, npairs}). 

We used the same train/test split as in \cite{liftedstruct} for all
the datasets. The CUB200-2011 dataset \cite{birds} has $11,788$ images
of $200$ bird species;  we used the first $100$ birds species for
training and the remaining $100$ species for testing. The Cars196
dataset \cite{cars} has $16,185$ images of $196$ car models. We used
the first $98$ classes of cars for training and the rest  for
testing. The Stanford online products dataset \cite{liftedstruct} has
$120,053$ images of $22,634$ products sold online on eBay.com. We used
the first $11,318$ product categories for training and the remaining
$11,316$ categories for testing. 

\subsection{Quantitative results}

The training procedure for all the methods converged at $10k$ iterations for the CUB200-2011 \cite{birds} and at $20k$ iterations for the Cars196 \cite{cars} and the Stanford online products \cite{liftedstruct} datasets.

Tables \ref{tab:birds}, \ref{tab:cars}, and \ref{tab:products} shows
the results of the quantitative comparison between our
method and other deep metric learning methods.
We report the NMI score, to measure the quality of the clustering,
as well as  $k$ nearest
neighbor performance with the Recall@K metric. The tables show that our proposed method has the state of the art performance on both the NMI and R@K metrics outperforming all the previous methods.
  
\begin{table}[htbp]
\centering
\renewcommand{\arraystretch}{1.2}
\renewcommand{\tabcolsep}{1.5mm}
\begin{tabular}{*{6}{c}}
\toprule
& NMI & R@1 & R@2 & R@4 & R@8\\
\midrule
\midrule
Triplet semihard \cite{facenet}   & 55.38 & 42.59 & 55.03 & 66.44 &  77.23 \\
Lifted struct \cite{liftedstruct}     & 56.50 & 43.57 & 56.55 & 68.59 & 79.63 \\
Npairs \cite{npairs} & 57.24 & 45.37 & 58.41 & 69.51 & 79.49 \\
Clustering (Ours)  & \bf{59.23}  & \bf{48.18} & \bf{61.44} & \bf{71.83} & \bf{81.92} \\
\bottomrule
\end{tabular}
\vspace{0.5em}
\caption{Clustering and recall performance on CUB-200-2011 \cite{birds} @$10k$ iterations.}
\label{tab:birds}
\end{table}

\begin{table}[htbp]
\centering
\renewcommand{\arraystretch}{1.2}
\renewcommand{\tabcolsep}{1.5mm}
\begin{tabular}{*{6}{c}}
\toprule
& NMI & R@1 & R@2 & R@4 & R@8\\
\midrule
\midrule
Triplet semihard \cite{facenet}   & 53.35 & 51.54 & 63.78 & 73.52 & 82.41\\
Lifted struct \cite{liftedstruct}     & 56.88 & 52.98 & 65.70 & 76.01 & 84.27 \\
Npairs \cite{npairs} & 57.79 & 53.90 & 66.76 & 77.75 & 86.35\\
Clustering (Ours) & \bf{59.04} & \bf{58.11} & \bf{70.64} & \bf{80.27} & \bf{87.81}\\
\bottomrule
\end{tabular}
\vspace{0.5em}
\caption{Clustering and recall performance on Cars196 \cite{cars} @$20k$ iterations.}
\label{tab:cars}
\end{table}


\begin{table}[htbp]
\centering
\renewcommand{\arraystretch}{1.2}
\renewcommand{\tabcolsep}{2.0mm}
\begin{tabular}{*{6}{c}}
\toprule
& NMI & R@1 & R@10 & R@100 \\
\midrule
\midrule
Triplet semihard \cite{facenet}   & 89.46 & 66.67 & 82.39 & 91.85 \\
Lifted struct \cite{liftedstruct}     & 88.65 & 62.46 & 80.81 & 91.93\\
Npairs \cite{npairs} & 89.37 & 66.41 & 83.24 & 93.00 \\
Clustering (Ours)  & \bf{89.48} & \bf{67.02} & \bf{83.65} & \bf{93.23} \\
\bottomrule
\end{tabular}
\vspace{0.5em}
\caption{Clustering and recall performance on Products \cite{liftedstruct} @$20k$ iterations.}
\label{tab:products}
\end{table}

\subsection{Qualitative results}

Figure \ref{fig:tsne-birds}, \ref{fig:tsne-cars}, and
\ref{fig:tsne-products} visualizes the t-SNE \cite{tsne} plots
on the embedding vectors from our method on CUB200-2011 \cite{birds},
Cars196 \cite{cars}, and Stanford online products \cite{liftedstruct}
datasets respectively. The plots are best viewed on a monitor when
zoomed in. We can see that our embedding does a great job on grouping
similar objects/products despite the significant variations in view
point, pose, and configuration.

\begin{figure}[thbp]
\centering
\includegraphics[width=0.499\textwidth]{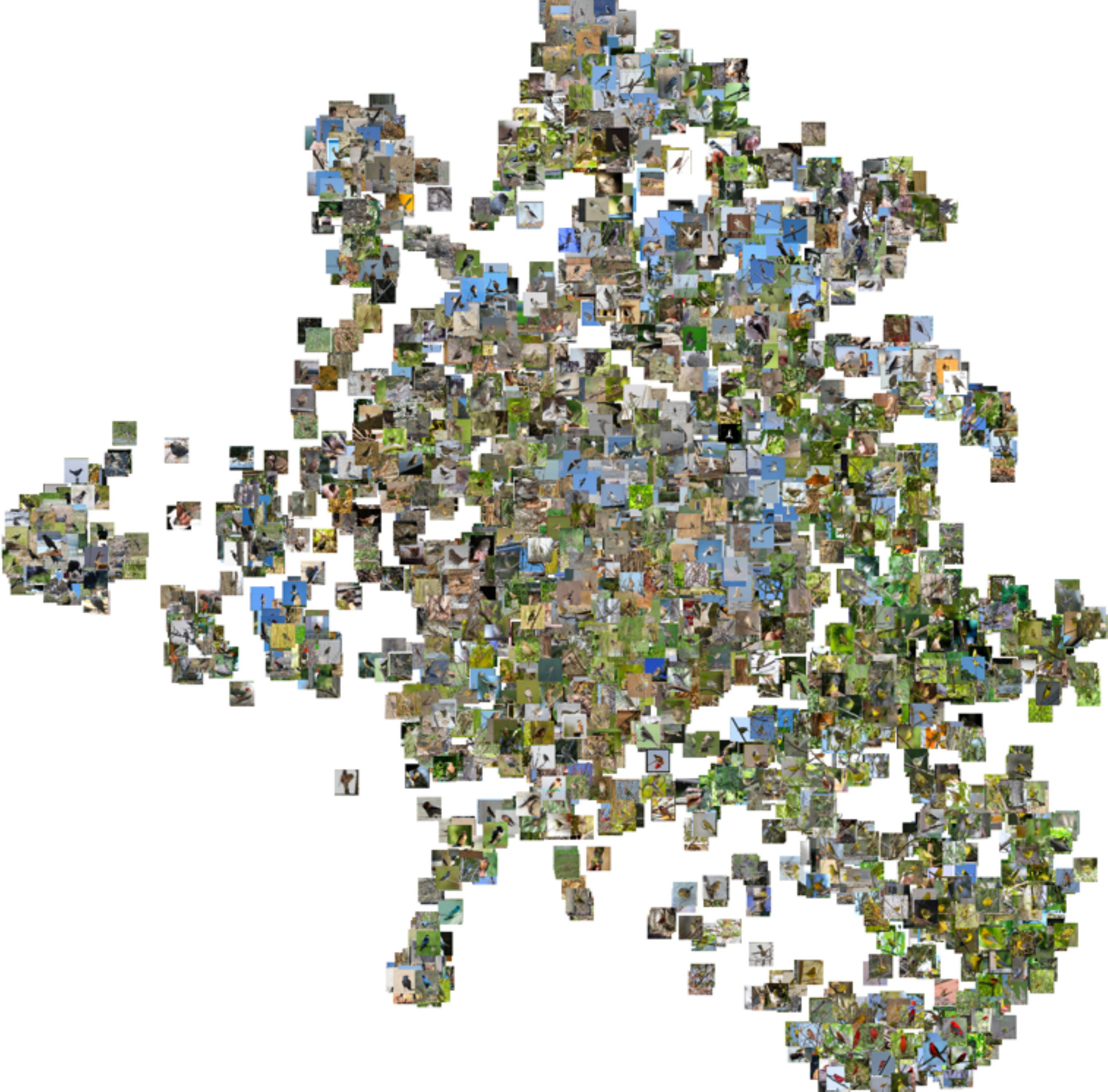}
\caption{Barnes-Hut t-SNE visualization \cite{tsne} of our embedding on the  CUB-200-2011 \cite{birds} dataset. Best viewed on a monitor when zoomed in.}
\label{fig:tsne-birds}
\end{figure}

\begin{figure}[thbp]
\centering
\includegraphics[width=0.499\textwidth]{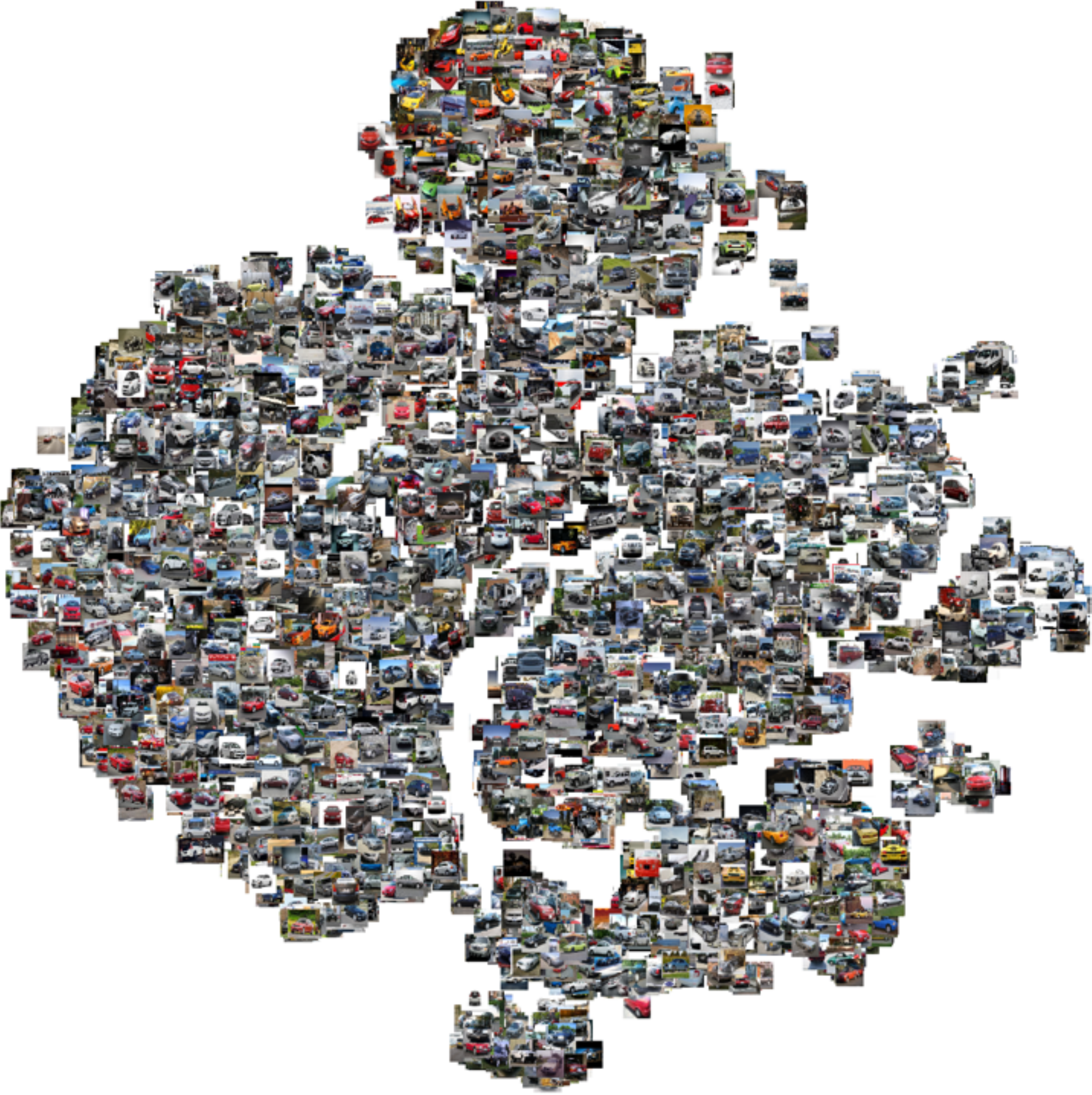}
\caption{Barnes-Hut t-SNE visualization \cite{tsne} of our embedding on the Cars196 \cite{cars} dataset. Best viewed on a monitor when zoomed in.}
\label{fig:tsne-cars}
\end{figure}

%

\begin{figure*}[thbp]
\centering
\includegraphics[width=\textwidth]{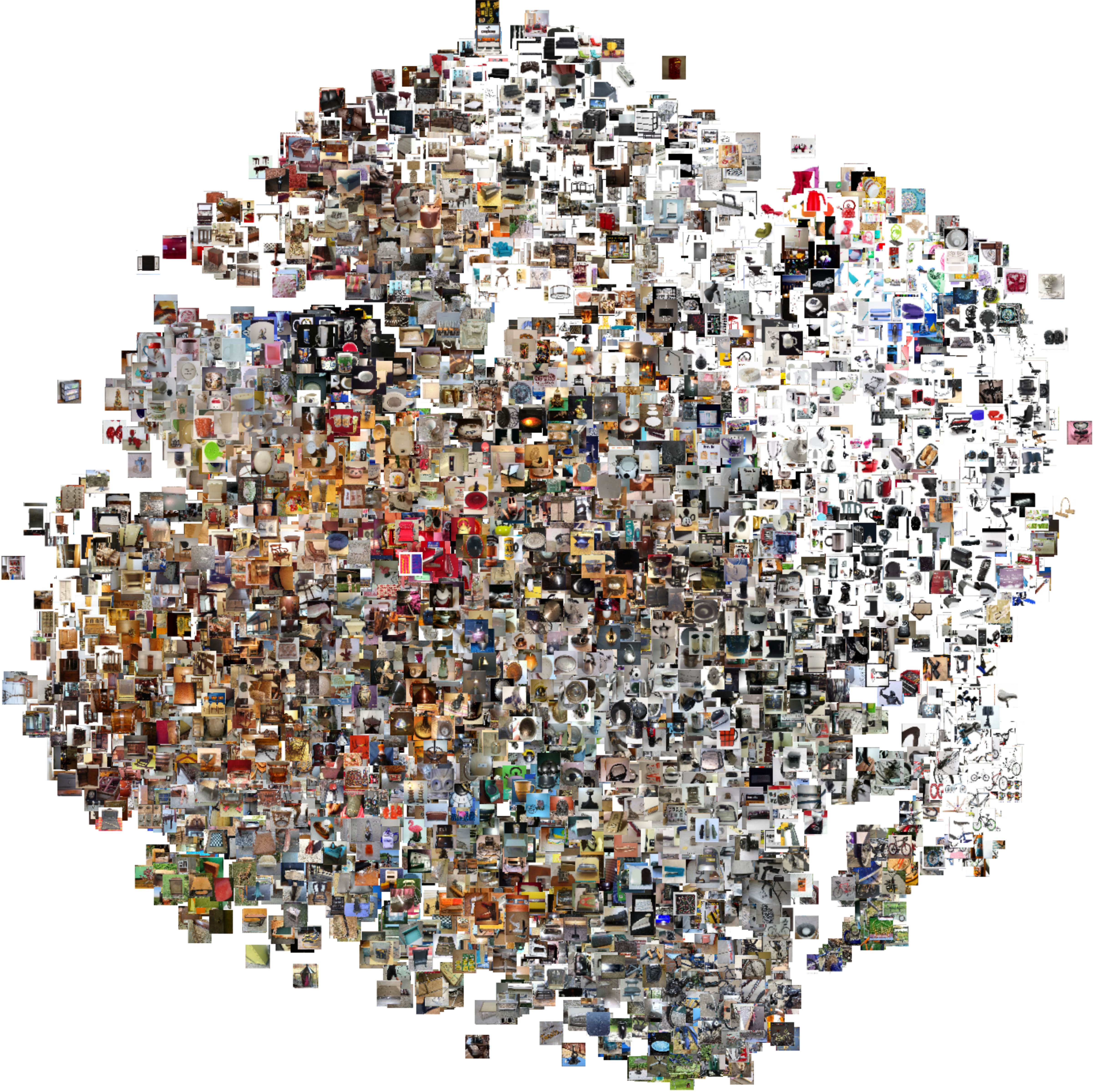}
\caption{Barnes-Hut t-SNE visualization \cite{tsne} of our embedding on the Stanford online products dataset \cite{liftedstruct}. Best viewed on a monitor when zoomed in.}
\label{fig:tsne-products}
\end{figure*}


\section{Conclusion}
\label{sec:conclusion}

We described a novel learning scheme for optimizing the deep metric embedding with the learnable clustering function and the clustering metric (NMI) in a end-to-end fashion within a principled structured prediction framework.

Our experiments on CUB200-2011 \cite{birds}, Cars196 \cite{cars}, and Stanford online products \cite{liftedstruct} datasets show state of the art performance both on the clustering and retrieval tasks.

The proposed clustering loss has the added benefit that it doesn't require rigid and time consuming data preparation (i.e. no need for preparing the data in pairs \cite{contrastive}, triplets \cite{triplet, facenet}, or n-pair tuples \cite{npairs} format). This characteristic of the proposed method opens up a rich class of possibilities for advanced data sampling schemes.

In the future, we plan to explore sampling based gradient averaging scheme where we ask the algorithm to cluster several random subsets of the data within the training batch and then average the loss gradient from multiple sampled subsets in similar spirit to Bag of Little Bootstraps (BLB) \cite{blb}. 

{\small
\bibliographystyle{ieee}
\bibliography{cvpr17}
}

\end{document}